\documentclass[letterpaper]{article} 
\usepackage{aaai25} 
\usepackage{times}  
\usepackage{helvet} 
\usepackage{courier} 
\usepackage[hyphens]{url}
\usepackage{graphicx} 
\urlstyle{rm} 
  
\usepackage{natbib}  
\usepackage{caption} 
\frenchspacing  
\setlength{\pdfpagewidth}{8.5in}  
\setlength{\pdfpageheight}{11in}  

\usepackage{algorithm}
\usepackage{algorithmic}
\usepackage{amsmath}

\usepackage{subcaption}
\usepackage{xcolor}
\usepackage{booktabs}
\usepackage{amssymb}
\usepackage{amsthm}
\newtheorem{theorem}{Theorem}

\theoremstyle{definition}
\newtheorem{definition}{Definition}

\usepackage{newfloat}
\usepackage{listings}
\DeclareCaptionStyle{ruled}{labelfont=normalfont,labelsep=colon,strut=off} 
\lstset{
	basicstyle={\footnotesize\ttfamily},
	numbers=left,numberstyle=\footnotesize,xleftmargin=2em,
	aboveskip=0pt,belowskip=0pt,
	showstringspaces=false,tabsize=2,breaklines=true}
\floatstyle{ruled}
\newfloat{listing}{tb}{lst}{}
\floatname{listing}{Listing}

\pdfinfo{
/TemplateVersion (2025.1)
}

\setcounter{secnumdepth}{0} 
\title{Clustering by Mining Density Distributions and Splitting Manifold Structure}
\author{
    Zhichang Xu\textsuperscript{\rm 1}, 
    Zhiguo Long\textsuperscript{\rm 1}\thanks{Corresponding authors.}, Hua Meng\textsuperscript{\rm 2}\footnotemark[1]
}
\affiliations{
    \textsuperscript{\rm 1}School of Computing and Artificial Intelligence, Southwest Jiaotong University, Chengdu 611756, China\\
    \textsuperscript{\rm 2}School of Mathematics, Southwest Jiaotong University, Chengdu 611756, China\\
    2023201798@my.swjtu.edu.cn, \{zhiguolong, menghua\}@swjtu.edu.cn
}

\usepackage{bibentry}

\begin{document}

\maketitle

\begin{abstract}
Spectral clustering requires the time-consuming decomposition of the Laplacian matrix of the similarity graph, thus limiting its applicability to large datasets. To improve the efficiency of spectral clustering, a top-down approach was recently proposed, which first divides the data into several micro-clusters (granular-balls), then splits these micro-clusters when they are not ``compact'',
and finally uses these micro-clusters as nodes to construct a similarity graph for more efficient spectral clustering. 
However, this top-down approach is challenging to adapt to unevenly distributed or structurally complex data.
This is because constructing micro-clusters as a rough ball struggles to capture the shape and structure of data in a local range, 
and the simplistic splitting rule that solely targets ``compactness'' is susceptible to noise and variations in data density and leads to micro-clusters with varying shapes, making it challenging to accurately measure the similarity between them.
To resolve these issues and improve spectral clustering, this paper first proposes to start from local structures to obtain micro-clusters, such that the complex structural information inside local neighborhoods is well captured by them. 
Moreover, by noting that Euclidean distance is more suitable for convex sets, this paper further proposes a data splitting rule that couples local density and data manifold structures, so that the similarities of the obtained micro-clusters can be easily characterized. 
A novel similarity measure between micro-clusters is then proposed for the final spectral clustering.
A series of experiments based on synthetic and real-world datasets demonstrate that the proposed method has better adaptability to structurally complex data than granular-ball based methods.
\end{abstract}

\section{Introduction}
Clustering is an unsupervised learning method aiming to reveal the intrinsic distribution characteristics of data by dividing the dataset into several non-overlapping clusters. 
It has wide applications in various fields such as computer vision~\cite{DBLP:conf/iccv/TronZED17}, language processing~\cite{DBLP:conf/emnlp/0001WS23}, and bioinformatics~\cite{DBLP:journals/bioinformatics/ChengM22}.

Spectral lustering is a representative graph partition clustering algorithm, 
the core idea of which is to (approximately) minimize a cut loss of partitioning a similarity graph of data by removing some edges.
It has attracted wide attention
due to its exceptionally good performance in handling non-convex shaped
clusters~\cite{DBLP:journals/sac/Luxburg07,DBLP:journals/pami/ChenSBLC11,DBLP:journals/tcyb/HeRGZ19}.

\begin{figure}[t]
\centering
\includegraphics[width=0.45\columnwidth]{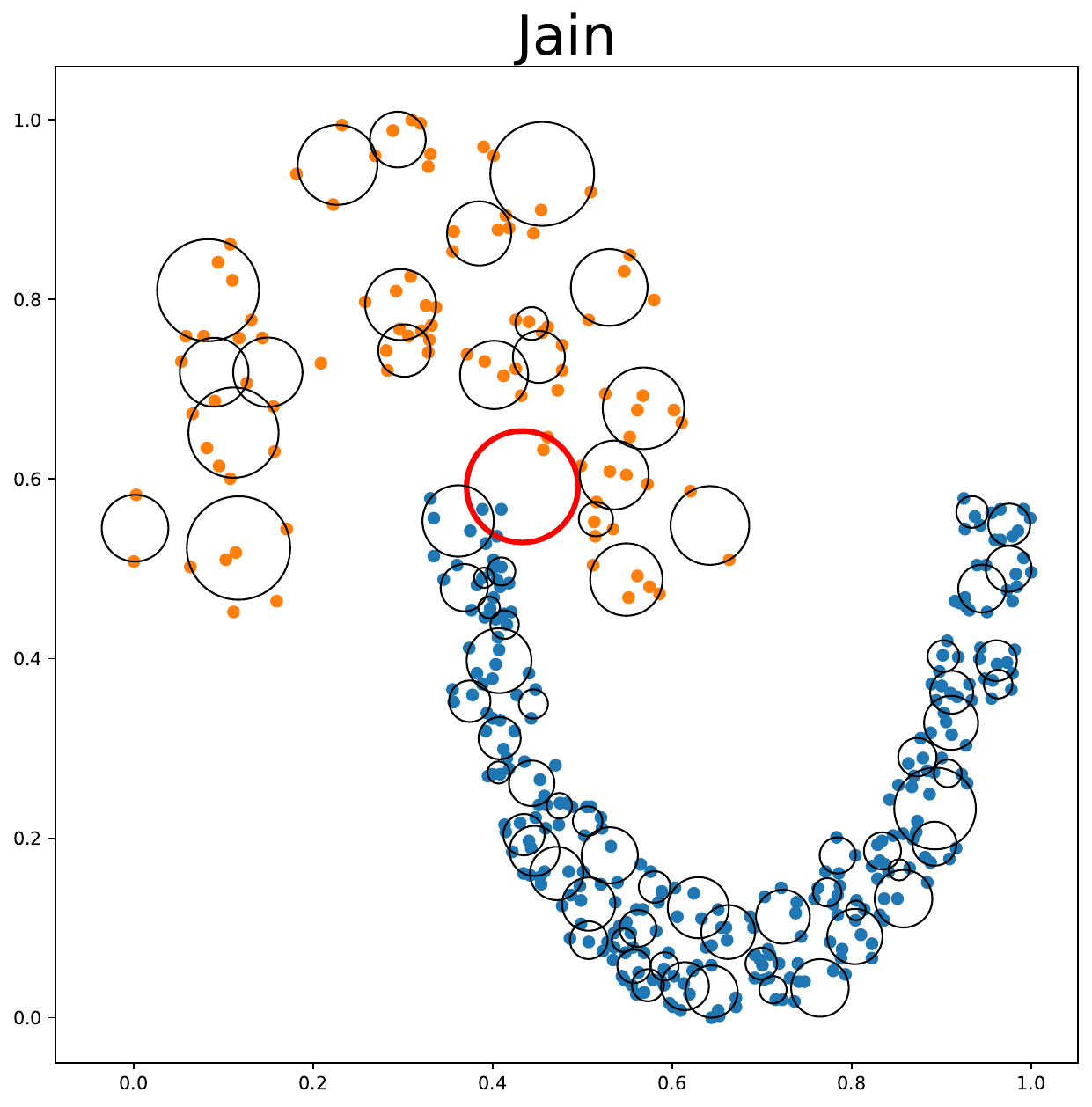} 
\includegraphics[width=0.45\columnwidth]{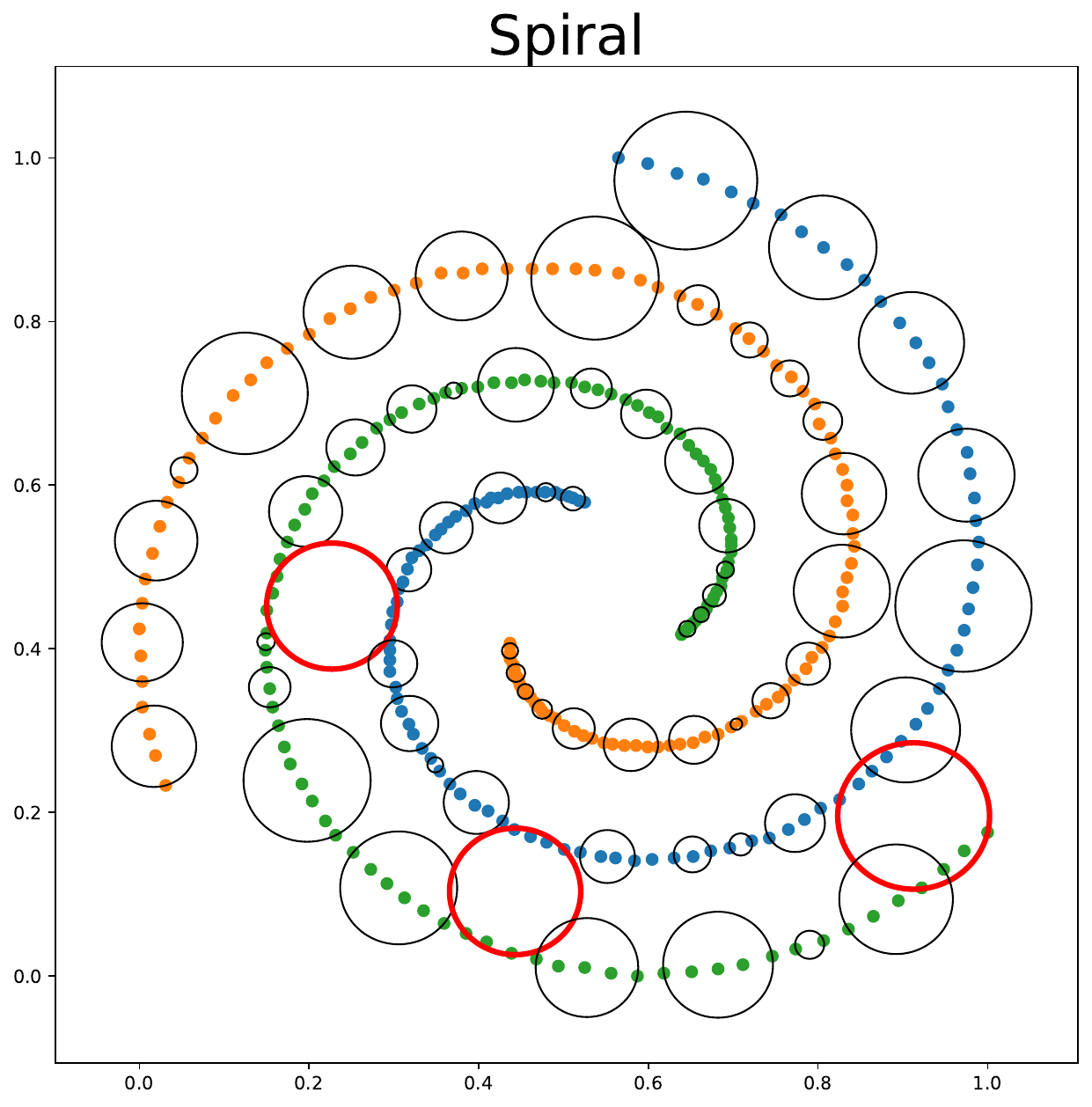} 
\caption{Illustration of errors (red circles) in granular-balls.}
\label{fig1}
\end{figure}

Spectral clustering involves the spectral decomposition of the Laplacian matrix of a similarity graph, 
which has a prohibitively high complexity of $\mathcal{O}(n^{3})$ ($n$ represents the number of nodes in the graph) for large datasets.
To improve the scalability, 
researchers have considered performing approximate spectral decomposition~\cite{DBLP:conf/icml/VladymyrovC16}, representing similarity between data approximately with anchors~\cite{DBLP:journals/tcyb/CaiC15,DBLP:journals/tkde/HuangWWLK20}, and constructing more sparse similarity graphs~\cite{DBLP:conf/kdd/WuCYXXA18}. 
These methods have different advantages and disadvantages, e.g., approximate decomposition needs to balance between efficiency and accuracy, and anchor-based ones are very sensitive to anchor numbers and positions. There is also research trying to fuse these directions~\cite{DBLP:journals/pr/YangDCCYGH23}.

Recently, \citet{DBLP:journals/tkde/XieKXWG23} proposed a new method called GBSC 
that progressively splits the data into micro-clusters (\emph{granular-balls}) in a top-down manner, 
to obtain a coarse-grained representation of the original data, 
and then performs spectral clustering on balls representing several similar data points to reduce the size of the similarity graph.
As each point is represented by a granular-ball, the overall structural
information of data would be better captured than using the sampled anchors
that is subject to missing the selection of some important anchors.
This makes the granular-ball based approach very promising.

However, as it heavily relies on the quality of balls and the accuracy of the similarity measure between the balls,
GBSC has two significant deficiencies for data of complex structures.
Firstly, the splitting rule could produce low quality balls for complex data, because the rule only targets more ``compact'' balls and is based on a global view of data and a top-down manner.
For example, in Fig.~\ref{fig1}, when generating granular-balls from two clusters with significant density differences or manifold structure, some boundary points from different clusters are incorrectly grouped into the same granular-ball.
Secondly, a granular-ball might contain data points distributed on a non-convex shape when the dataset is complex, and thus the Euclidean-based similarity between balls might no longer be appropriate.

To resolve these issues and accelerate spectral clustering, this paper proposes another approach to represent data in a more coarse granularity. It first tries to capture more detailed local structural information of data by constructing micro-clusters from an estimation of data densities.
Then it follows a more sophisticated splitting rule that also considers the convexity of data (called \emph{manifold curvature}) to improve the quality of micro-clusters. The more convex distribution of data in micro-clusters also makes the design of similarity measure more straightforward and thus the final spectral clustering more effective.

The contributions of this paper are as follows:
\begin{itemize}
\item We propose a coarse-grained data representation scheme that combines local density estimation and convex splitting of local manifolds that is exploited to improve spectral clustering. 

\item Compared to granular-balls, the complex structures of data are better captured by extracting local density features to form a coarse-grained representation of data.
\item Manifold curvature is introduced to split micro-clusters into more convex ones, which results in easier characterization of the intrinsic similarities between micro-clusters.

\end{itemize}

The rest of the paper is organized as follows. We first introduce the related work and review how granular-balls are generated, and then discuss the motivation and the framework of our algorithm, followed by experimental evaluations of the algorithm.

\section{Related Work}
The improvement and acceleration of spectral clustering has always been a focus in the field.
Earlier, researchers explored numerical computation methods to accelerate spectral decomposition, e.g., the Nystr{\"o}m method~\cite{DBLP:journals/pami/FowlkesBCM04} efficiently approximates the spectral decomposition of a large Laplacian matrix by sampling data points. Later research~\cite{DBLP:conf/icml/VladymyrovC16,DBLP:conf/nips/Macgregor23} improved the sampling strategy and the representation ability to increase stability and reduce approximation error. 
However, these algorithms face the issue of balancing efficiency and approximation accuracy, and sampling-based schemes also have the problem of instability.

Another direction is to use anchors to characterize data, where the similarity between points are approximated using the similarity between points and anchors.
Since the similarity graph between points and anchors is a bipartite graph, 
the spectral decomposition can be more efficient by working on a smaller matrix.
For example, \citet{DBLP:journals/tcyb/CaiC15} performed K-means to obtain anchor points and constructed a sparsified bipartite graph between data points and these anchors by keeping only the connections of several nearest anchors for each point.
\citet{DBLP:journals/tkde/HuangWWLK20} further improved computational efficiency by first applying K-means on randomly sampled points to obtain anchors, then constructing a bipartite graph via fast approximate nearest neighbors, and finally performing transfer-cut on the bipartite graph for efficient spectral decomposition. There are also works~\cite{DBLP:journals/pr/GaoCNYW24,DBLP:journals/pami/NieXYL24} on combining anchors with the optimization process to increase efficiency. 

Sparse similarity graphs can also be used to accelerate spectral clustering.
For example, the SCRB method~\cite{DBLP:conf/kdd/WuCYXXA18} used random binning features to generate inner products that approximate the similarity matrix of data, and employed singular value decomposition of large sparse matrices to improve the efficiency of spectral decomposition. 
The RESKM \cite{DBLP:journals/pr/YangDCCYGH23} framework tried to ensemble multiple strategies for more efficient spectral clustering.

Using micro-clusters to represent a group of data points was also a promising direction 
to reduce the size of data used for spectral clustering.
The KASP method~\cite{DBLP:conf/kdd/YanHJ09} used K-means to group data points into micro-clusters and performed spectral clustering on the centers of these micro-clusters to reduce the size of similarity graph.
The GBSC method~\cite{DBLP:journals/tkde/XieKXWG23} is a new and more sophisticated way to group data points into micro-clusters,  
by splitting the micro-clusters in a top-down manner to obtain a coarse-grained representation of the data, aiming for more compact micro-clusters. 
This method is combined with spectral clustering to achieve final clustering, while it can be combined with other clustering methods~\cite{cheng2023fast} as well, demonstrating promising prospects. 
However, this top-down manner for a coarse-grained representation can result in micro-clusters of low quality which can distort the final clustering result. 
Therefore, this paper proposes a new micro-cluster construction strategy based on local structures, and also optimizes the splitting strategy to make the micro-clusters more convex and with high purity, to align with the needs of spectral clustering.

\begin{figure*}[t]
\centering
\includegraphics[width=0.9\textwidth]{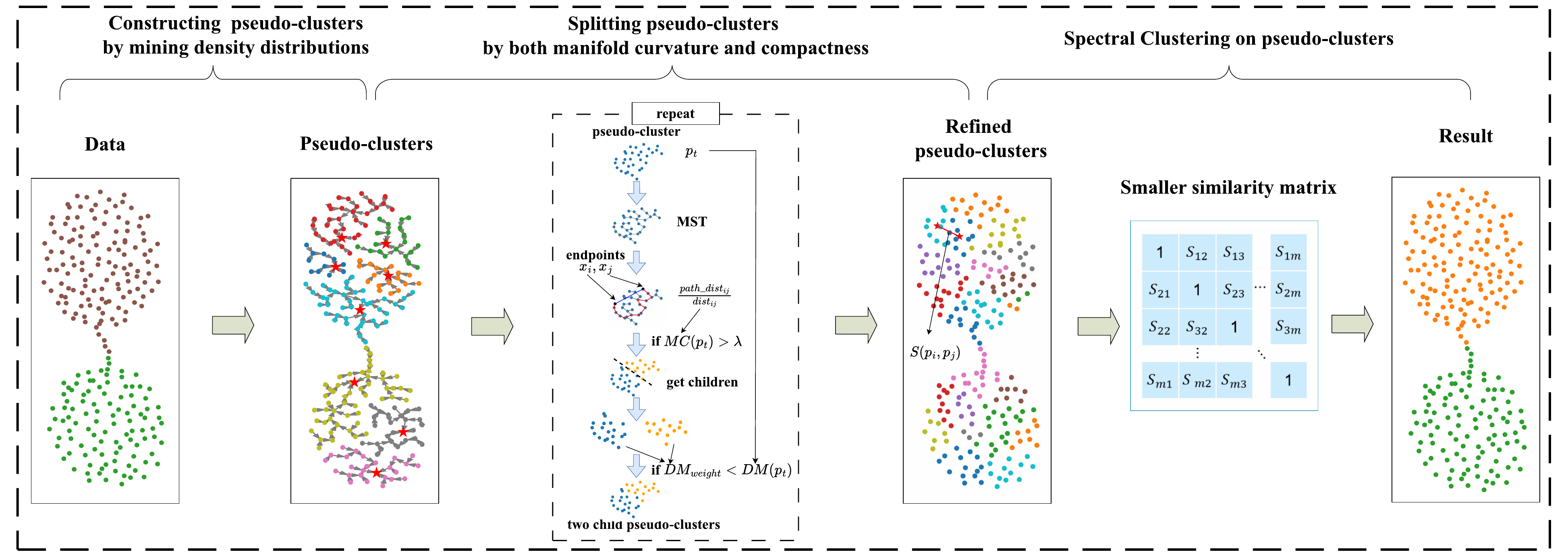} 
\caption{The framework of the proposed MDMSC algorithm.}
\label{fig2}
\end{figure*}

\section{Granular-Ball Generation}

Given a dataset $X=\{x_1,...,x_n\}$ $(x_i\in \mathbb{R}^{d})$, 
the target of GBSC~\cite{DBLP:journals/tkde/XieKXWG23} is to use a set of \emph{granular-balls} as micro-clusters to cover the dataset, such that each data point belongs to a single granular-ball. 
A granular-ball is just an $d$ dimensional ball while the radius of the ball can vary from each other.

Specifically, suppose $GB_j$ is a granular-ball covering the $m$ data points $\{x_{j_1},\ldots,x_{j_{m}}\}\subseteq X$, then 
the center $c_j$ and the radius of $GB_j$ 
are determined as $c_j=\frac{1}{m}\sum_{s=1}^{m}x_{j_s}$ and $r_j=\max_s(\begin{Vmatrix}x_{j_s}-c_j\end{Vmatrix})$, where $||\cdot||$ denotes the $\ell_2$ norm.
Note that if the radius of the granular-ball is large then the ``granularity'' is coarse and the clustering on the balls would be fast while much structural information would be lost;
otherwise, the ``granularity'' is fine but clustering would be slow.
Therefore, generating granular-balls needs to balance granularity and the \emph{quality} of the balls.

In GBSC, the quality of a granular-ball $GB_j$ is defined as 
\begin{equation}\label{eq:dm}
DM_j=\frac{1}{m}\sum_{s=1}^{m}\|x_{j_s}-c_j\|. 
\end{equation}
$DM_j$ actually measures the ``compactness'' of the points within $GB_j$, where a smaller value of $DM_j$ means that the distance between the points is mostly small.

The generation of granular-balls in a top-down manner works as follows.
First, a granular-ball $GB_A$ covering the entire dataset is generated.
Then, two farthest points $p_1$ and $p_2$ are selected, and the points in $GB_A$ are assigned to $p_1$ if they are closer to $p_1$ than to $p_2$.
Two children balls $GB_{A_1}$ and $GB_{A_2}$ are then generated using the two subsets of points.

The core splitting strategy in GBSC is that
if the \emph{weighted quality} of the children balls is higher than that of the parent ball.
Suppose $GB_{j_1}$ and $GB_{j_2}$ are the two children balls of $GB_j$, covering $m_1$ and $m_2$ points, respectively. 
Then the \emph{weighted quality} of the children balls is 
\begin{equation}\label{eq:dmw}
DM_{weight}=\frac{m_1}{m}DM_{j_1}+\frac{m_2}{m}DM_{j_2}. 
\end{equation}

For corner cases, GBSC also provides other splitting strategies including the restrictions on the number of points and the radius of a ball (see~\cite{DBLP:journals/tkde/XieKXWG23}).
The above splitting process is repeated to generate final granular-balls until
no more split can happen.

\section{Algorithm}
The splitting scheme of GBSC is performed in a top-down manner, and it does not consider local information, which can lead to incorrect splitting of local structures and thus affect the clustering performance.

To address these issues, we propose an improved algorithm MDMSC that initially partitions the dataset into multiple pseudo-clusters, instead of granular-balls, based on the density distribution of the data points,
and then further split the pseudo-clusters based on structural characteristics. 
This approach aims to better capture the local features of the dataset and improve the clustering performance on complex datasets.

MDMSC consists of three stages: constructing pseudo-clusters as micro-clusters, splitting pseudo-clusters, and performing spectral clustering based on the similarity of the final pseudo-clusters. The algorithm framework is illustrated in Fig. \ref{fig2}.

\subsection{Constructing Pseudo-Clusters}
For each point $x_i$ in a dataset $X\in P^{n\times d}$, let $N_k(x_i)$ be the set of the $k$ nearest neighbors of $x_i$ (excluding $x_i$ itself). 

Before defining pseudo-clusters, we need to first define \emph{density} and \emph{leader}.

\begin{definition} 
    The \emph{density} $\rho_k(x_i)$ of a point $x_i$, is defined as
\begin{equation}
    \rho_k(x_i)=\sum_{j=1}^k\exp(-dist_{ij}^2),
\end{equation}
where $dist_{ij}$ refers to the Euclidean distance between point $x_i$ and its neighbor $x_j$.  The density is measured by the sum of Gaussian kernels of the Euclidean distances, reflecting the compactness of the local structure around the point.
\end{definition}

The \emph{leader} of each point $x_i$, denoted as $\mathsf{leader}(x_i)$, is defined as the nearest higher-density point of $x_i$.
\begin{definition}
    The leader of a point $x_i$ is 
\begin{equation}
    \mathsf{leader}(x_i)=\left\{
        \begin{array}{ll}
            \mathop{\text{argmin}}\limits_{x_j \in \mathcal{H}(x_i)}{dist_{ij}}& \text{if $\mathcal{H}(x_i)\neq\emptyset$}\\
            \text{None}& \text{otherwise}
        \end{array}\right.
\end{equation}
where the set $\mathcal{H}(x_i)=\{x_j \mid x_j\in N_k(x_i), \rho_k(x_j)>\rho_k(x_i)\}$.
Points without a leader are called \emph{core points}, and the set of core points is denoted as $\mathsf{core}=\{x_i \mid \mathsf{leader}(x_i)=\text{None}\}$.
\end{definition}

\begin{definition}[Pseudo-cluster]
    Let $G=(X,E)$ be a directed graph, where $(x_i,x_j) \in E$ if $x_j$ is the leader of $x_i$. A \emph{pseudo-cluster} is a connected component of $G$.
\end{definition}
Pseudo-clusters are actually a local tree structure, where its root is a core point. 
By considering density distributions, a local tree structure has high purity and can thus better reflect local structures of data than granular-balls.

Intuitively, by connecting each point $x_i$ to its leader $\mathsf{leader}(x_i)$, multiple disjoint pseudo-clusters 
are formed, with each pseudo-cluster being defined by the unique core point within the pseudo-cluster. Algorithm~\ref{alg:algorithm} shows the steps of constructing pseudo-clusters. 

Fig.~\ref{fig3} illustrates the pseudo-clusters of Jain with $k=10$ and Spiral with $k=4$. It can be seen that pseudo-clusters better reflect the local characteristics of the dataset, and the problem of a granular-ball in Fig.~\ref{fig1} containing points from different clusters has been rectified. 
\begin{figure}[tb]
\centering
\includegraphics[width=0.45\columnwidth]{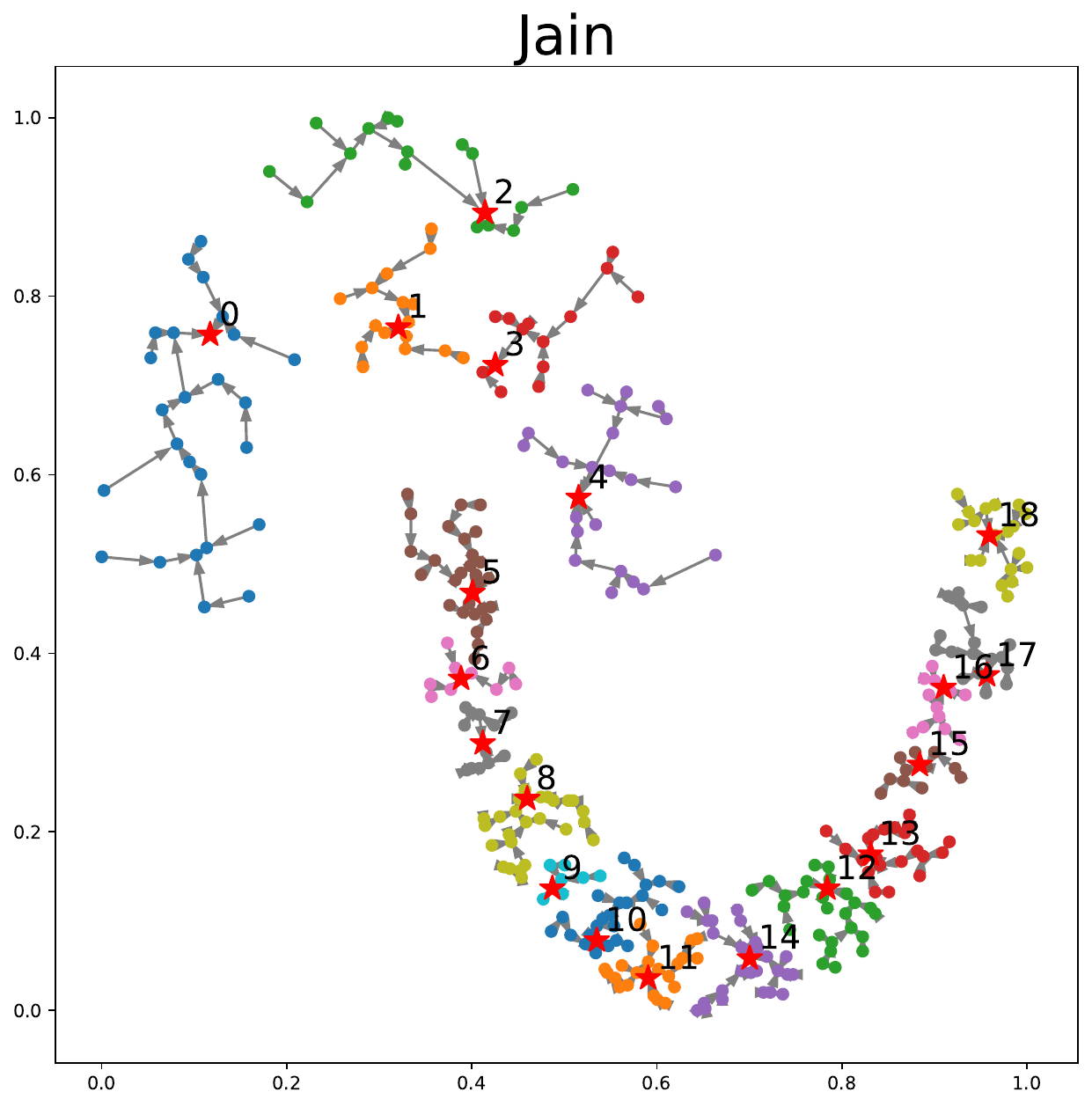} 
\includegraphics[width=0.45\columnwidth]{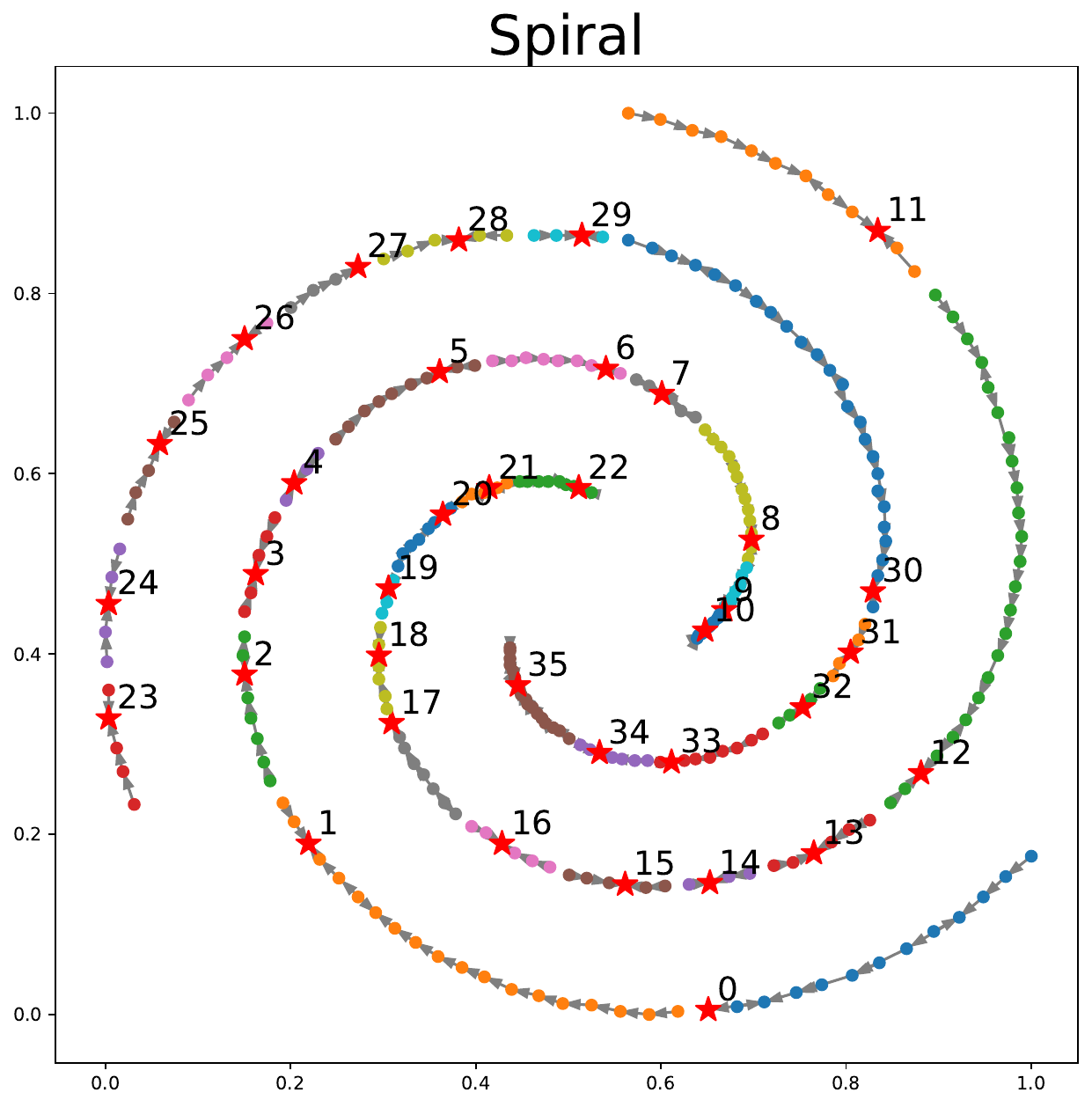} 
\caption{Illustration of pseudo-clusters and their complex structures.}
\label{fig3}
\end{figure}

\begin{algorithm}[tb]  
\caption{Constructing pseudo-clusters}  
\label{alg:algorithm}  
\begin{algorithmic}[1] 
\REQUIRE Dataset X, and the number of nearest neighbors $k$.
\ENSURE Pseudo-clusters $\mathsf{pseudo}\_\mathsf{clusters}$.
\FOR{each $x_i \in X$}   
\STATE Calculate $N_k(x_i)$;  
\STATE Calculate $\rho_k(x_i)$;
\ENDFOR
\STATE $\mathsf{core} \gets \emptyset$;  
\FOR{each $x_i \in X$}  
\STATE Calculate $\mathsf{leader}(x_i)$;  
\IF {$\mathsf{leader}(x_i)$ = None}  
\STATE \quad $\mathsf{core} \gets \mathsf{core} \cup \{x_i\}$;  
\ENDIF  
\STATE Connect $x_i$ and $\mathsf{leader}(x_i)$;
\ENDFOR  
\STATE Identify $\mathsf{pseudo}\_\mathsf{clusters}$ as connected components;
\STATE \textbf{return} $\mathsf{pseudo}\_\mathsf{clusters}$
  
\end{algorithmic}  
\end{algorithm}

\subsection{Splitting Pseudo-Clusters}
Although the pseudo-clusters reflect the basic distribution of the data, these pseudo-clusters may be too coarse for the entire dataset. 
Additionally, 
pseudo-clusters have various shapes and structures, many of which are non-convex, which makes measuring the similarity between pseudo-clusters difficult.

On the left side of Fig. \ref{fig3}, the pseudo-clusters of numbers 1, 2, 3, and 4 exhibit
non-convex structures and complex shapes, and the distance between these pseudo-clusters and others 
can not be easily measured with Euclidean based distances.
For example, for pseudo-cluster 2, although its core point is close to pseudo-cluster 1, the points in the left part of it are far away. The situation is similar on the other side.

To address this issue, we need to split the pseudo-clusters into simpler convex structures.
We introduce the measurement of \emph{manifold curvature} to better determine 
if a pseudo-cluster is too ``curved''. 
\begin{definition}[Manifold curvature] 
    Suppose $p_t$ is a pseudo-cluster, and $T(p_t)$ is a minimum spanning tree (MST) of the complete weighted graph for the points in $p_t$, where the weights of the edges are the Euclidean distances between the points.
    The \emph{manifold curvature} of a pseudo-cluster $p_t$ is 
\begin{equation}
    MC(p_t)=\frac{path\_dist_{ij}}{dist_{ij}} 
\ (x_i,x_j=\mathsf{endpoints}(p_t)),
\end{equation}
where $path\_dist$ is the shortest path distance between points on $T(p_t)$, and $\mathsf{endpoints}(p_t)=\operatorname{argmax}_{x_i,x_j\in p_t} path\_dist_{ij}$.
\end{definition}

 Intuitively, if the geodesic distance (the shortest distance on a manifold) equals the Euclidean distance, then the set is convex, and non-convex otherwise. Here, we use the shortest path distance in MST to approximate the geodesic distance, and the ratio of them to measure curvature.

Thus, we define the manifold curvature as the ratio of the shortest path distance to the Euclidean distance between the endpoints. 
The larger the ratio is, the more ``curved'' the pseudo-cluster will be.
When the ratio approaches 1, the data will be nearly convex.

By splitting pseudo-clusters under the measurement of manifold curvature, 
we can ensure that the pseudo-clusters have stronger convexity, 
making the consideration of pseudo-cluster compactness more reasonable and also beneficial for capturing the intrinsic similarities.

To determine whether a pseudo-cluster $p_t$ should be split, 
we consider both the manifold curvature and the compactness of the original pseudo-cluster and its child pseudo-clusters according to Eq.~\ref{eq:dm} and Eq.~\ref{eq:dmw}.
In particular, if $MC(p_t)\ge \lambda$ and $DM_{weight} < DM(p_t)$, then $p_t$ will be split. Because geodesic distance is estimated by graph distance, the two distances would be equal only with sufficient sampling; otherwise, graph distance is larger than the true geodesic distance. Thus, we set a threshold of 1.5 for the $\lambda$. In order to avoid too small pseudo-clusters, we also require $p_t$ should contain a minimum number $\beta$ of points, as in~\cite{DBLP:journals/tkde/XieKXWG23}.

When a pseudo-cluster needs to be split, 
its two endpoints $\mathsf{endpoints}(p_t)$ will be used to construct
the new child pseudo-clusters, where the other points in $p_t$ are assigned to the new pseudo-clusters based on proximity to the endpoints. 
The process will be repeated until none of the pseudo-clusters can be split.

\begin{algorithm}[tb]
\caption{MDMSC}
\label{alg:algorithm2}
\begin{algorithmic}[1]
\REQUIRE Dataset $X$, the number of nearest neighbors $k$, the manifold curvature threshold $\lambda$, and the minimum size of pseudo-cluster $\beta$.
\ENSURE Final clustering labels.
\STATE Get $\mathsf{pseudo}\_\mathsf{clusters}$ by Algorithm~\ref{alg:algorithm};
\FOR{each $p_t \in \mathsf{pseudo}\_\mathsf{clusters}$}
    \STATE Generate a complete undirected graph $G(p_t)$;
    \STATE Generate an MST $T(p_t)$ from $G(p_t)$;
    \STATE Calculate $DM_{weight}$, $DM(p_t)$ and $MC(p_t)$;
\ENDFOR
\WHILE{$\exists p_t$ s.t. $MC(p_t) >\lambda$ and $DM_{weight} < DM(p_t)$ and $|p_t|>\beta$}
    \STATE Split $p_t$ into $p_{t_1}$ and $p_{t_2}$ and add them to $\mathsf{pseudo}\_\mathsf{clusters}$; 
    \STATE Remove $p_t$ from $\mathsf{pseudo}\_\mathsf{clusters}$; 
\ENDWHILE
\STATE Calculate similarity matrix $S$;
\STATE Perform spectral clustering on the $S$ to obtain final clustering results.
\end{algorithmic}
\end{algorithm}

\subsection{Clustering Pseudo-Clusters}
After obtaining the final pseudo-clusters, 
the next step is to establish similarity between pseudo-clusters and ultimately perform clustering through graph partitioning
Here, we employ spectral clustering on pseudo-clusters.

Since the pseudo-clusters are approximately convex now, it is straightforward to use Euclidean distance to evaluate their similarity, as follows.
\begin{definition}[Similarity] 
    The similarity of two pseudo-clusters $p_i$ and $p_j$ is
\begin{equation}
    S(p_i,p_j)=\frac{|\mathsf{SNN}(p_i,p_j)|}{1+c\_dist(p_i,p_j)},
\end{equation}
where $\mathsf{SNN}(p_i,p_j) = (\cup_{x\in p_i} N_k(x)) \cap (\cup_{x\in p_j} N_k(x)$) is the shared nearest neighbors of $p_i$ and $p_j$,
and $c\_dist(p_i,p_j)$ is the Euclidean distance between the centroids of $p_i$ and $p_j$. A centroid of $p_i$ is $\frac{1}{m_i} \sum_{x\in p_i} x$, which is not necessarily a core point.
\end{definition}

Subsequently, we perform spectral clustering on the similarity matrix of the pseudo-clusters. 
Points within the same pseudo-cluster will be assigned the same cluster label.
The full steps of the proposed algorithm MDMSC is given in Algorithm~\ref{alg:algorithm2}.
\subsection{Time Complexity}

Suppose the dataset $X$ has $n$ samples with the dimensionality of $d$, the final number of pseudo-clusters is $m$, and the number of nearest neighbors is $k$. In Algorithm 1, the time complexity for finding k-nearest neighbors using the KD-tree method is $\mathcal{O}((d+k)n\log n)$, the time complexity for calculating density is $\mathcal{O}(kn)$, and the time complexity for finding the leader points is $\mathcal{O}(kn)$.

In Algorithm 2, suppose the number of samples in each pseudo-cluster is $n_i$,  for each pseudo-cluster, the time complexity for creating a complete graph is $\mathcal{O}(n_i^2)$, 
for obtaining the minimum spanning tree from the complete graph using Kruskal's algorithm is $\mathcal{O}(n_i^2\log n_i)$,
and for calculating the shortest path distances between any two points in the pseudo-cluster using Dijkstra's algorithm is $\mathcal{O}(n_i^2\log n_i)$. 
The total time complexity for splitting all existing pseudo-clusters once is $\sum_{i} \mathcal{O}(n_i^2\log n_i)$, where $\sum_{i} n_i = n$. The time complexity for spectral clustering is $\mathcal{O}(m^{3})$. 

Actually, $\sum_{i} \mathcal{O}(n_i^2\log n_i)$
is bounded by $\mathcal{O}(n n_{*}\log n_{*})$, where $n_{*}$ is the maximum of $n_i$, as shown in the following theorem (see the proof). 
Therefore, the overall time complexity of the Algorithm~\ref{alg:algorithm2} is thus $\mathcal{O}(n n_{*}(\log n_{*})+m^3)$.

\begin{theorem}
    Suppose there are $l$ pseudo-clusters. Let $n=\sum_{i=1}^l n_i$, where $n_i\ge 1$ is the number of points in a pseudo-cluster $p_i$. 
    Then $\sum_{i} n_i^2\log n_i = \mathcal{O}(nn_{*} \log {n_{*}})$, where $n_{*}$ is the maximum of $n_i$.
\end{theorem}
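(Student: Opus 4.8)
The plan is to bound the sum term by term, exploiting the fact that $n_i \le n_*$ together with the monotonicity of the map $t \mapsto t\log t$ on $[1,\infty)$. First I would rewrite each summand by pulling out one factor of $n_i$, namely $n_i^2\log n_i = n_i\cdot\bigl(n_i\log n_i\bigr)$. Since each $n_i\ge 1$, the function $t\mapsto t\log t$ is nondecreasing on the relevant range (its derivative $\log t + 1$ is positive for $t\ge 1$, and it vanishes at $t=1$), so $n_i \le n_*$ gives $n_i\log n_i \le n_*\log n_*$ for every $i$.

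Next I would sum these inequalities over $i = 1,\ldots,l$ and use $\sum_i n_i = n$:
\[
\sum_{i=1}^{l} n_i^2\log n_i \;=\; \sum_{i=1}^{l} n_i\,\bigl(n_i\log n_i\bigr) \;\le\; (n_*\log n_*)\sum_{i=1}^{l} n_i \;=\; n\,n_*\log n_*.
\]
This immediately yields $\sum_i n_i^2\log n_i = \mathcal{O}(n\,n_*\log n_*)$, which is the claim, and in fact with constant $1$.

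There is essentially no obstacle here; the only points that warrant a line of care are the degenerate terms with $n_i = 1$, which contribute $\log 1 = 0$ and are therefore harmless, and the choice of logarithm base, which only rescales the bound by a multiplicative constant absorbed into the $\mathcal{O}(\cdot)$. It is worth remarking that the estimate is tight in the worst case: taking a single pseudo-cluster ($l = 1$, so $n = n_*$) makes the left side equal to $n^2\log n = n\,n_*\log n_*$, so the stated complexity bound cannot be improved in general.
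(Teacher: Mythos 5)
Your proof is correct, and it takes a genuinely different and more elementary route than the paper. The paper first establishes a superadditivity inequality, $a_1^2\log a_1+\cdots+a_s^2\log a_s\le(a_1+\cdots+a_s)^2\log(a_1+\cdots+a_s)$, then greedily groups the pseudo-clusters into blocks whose total sizes lie in $[n_*/2,\,n_*]$ (except possibly one block), bounds each block's contribution by $n_*^2\log n_*$, and counts at most $2\lceil n/n_*\rceil+1$ blocks, yielding the bound with a constant of roughly $2$. You instead bound term by term: writing $n_i^2\log n_i=n_i\cdot(n_i\log n_i)$ and using the monotonicity of $t\mapsto t\log t$ on $[1,\infty)$ together with $n_i\le n_*$ gives $n_i^2\log n_i\le n_i\,n_*\log n_*$, and summing with $\sum_i n_i=n$ gives the bound with constant $1$. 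Your argument is shorter, avoids the grouping machinery entirely, handles the $n_i=1$ terms explicitly, and even shows the bound is attained when $l=1$; the paper's grouping idea would only earn its keep in settings where a per-term bound is too lossy, which is not the case here. Both proofs are valid, but yours is the cleaner one for this statement.
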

\begin{proof}
    Note that $a_1^2\log a_1 +a_2^2\log a_2 \le (a_1+a_2)^2 \log (a_1+a_2)$ and more generally $a_1^2 \log a_1 + \ldots + a_s^2 \log a_s \le (a_1 + \ldots a_s)^2 \log (a_1+\ldots +a_s)$ for any $a_i \ge 1$.

    Therefore, we can group the pseudo-clusters into $C$ groups such that the total number of points in each group is in $[n_{*}/2 , n_{*}]$, with the possible exception of one group whose total number of points is less than $n_{*}/2$.
    Then there are at most $2\lceil n/n_{*}\rceil+1$ groups.
    
    For each group containing the pseudo-clusters $p_{i_1}, \ldots, p_{i_s}$, we have that the corresponding summation terms $n_{i_1}^2\log{n_{i_1}}+\ldots+n_{i_s}^2\log{n_{i_s}} \le (n_{i_1}+\ldots+n_{i_s})^2 \log{(n_{i_1}+\ldots+n_{i_s})}$. Because of the grouping strategy, $n_{i_1}+\ldots+n_{i_s}\le n_*$, the summation of these terms is bounded by $n_*^2 \log n_*$ and $\sum_{i} n_i^2\log n_i \le C n_*^2 \log n_*$.
    Since $C \le 2\lceil n/n_{*}\rceil+1$, $\sum_{i} n_i^2\log n_i \le (2\lceil n/n_{*}\rceil+1) n_*^2 \log{n_*} = \mathcal{O}(nn_{*} \log {n_{*}})$.

\end{proof}

\section{Experiments}
\subsection{Experimental Setup}
The experimental environment is: Windows 10, CPU i5-9300H, GPU NVIDIA GeForce GTX 1050 and 8GB RAM, Python 3.9.

The experiments are conducted on 4 synthetic datasets: Aggregation \cite{DBLP:journals/tkdd/GionisMT07}, Spiral \cite{DBLP:journals/pr/ChangY08}, Jain \cite{DBLP:conf/premi/JainL05}, db2 \cite{DBLP:conf/kdd/EsterKSX96} and 13 UCI datasets \footnote{https://archive.ics.uci.edu/}.
We used three evaluation metrics: ARI~\cite{steinley2004properties}, NMI~\cite{DBLP:conf/sigir/XuLG03}, and ACC~\cite{DBLP:journals/tip/YangXNYZ10} for clustering analysis. 
Since the proposed algorithm is based on local density peaks and pseudo-cluster splitting followed by spectral clustering, we selected the following eight comparison algorithms:
GBSC~\cite{DBLP:journals/tkde/XieKXWG23}, GB-DP~\cite{cheng2023fast}, LDP-MST~\cite{DBLP:journals/tkde/ChengZHWY21}, USPEC~\cite{DBLP:journals/tkde/HuangWWLK20}, spectral clustering (SC)~\cite{DBLP:journals/pami/ShiM00}, DPC~\cite{rodriguez2014clustering}, DEMOS~\cite{DBLP:journals/tkde/GuanLCHC23}, and DPC-DBFN~\cite{DBLP:journals/pr/LotfiMB20}. 

The implementations of GBSC, GB-DP, LDP-MST, USPEC, DEMOS, and DPC-DBFN are from the source code provided by the authors.
The implementation of SC is from the scikit-learn library, and the implementation of DPC is based on the algorithm described in the paper.
For a fair comparison, we performed a search for the optimal hyperparameters required by each algorithm.
Among them, GBSC and GB-DP do not require hyperparameter settings. U-SPEC has a hyperparameter $p$; if the number of data points $n$ is greater than 1000, we set $p$ to 1,000 and to $n$ ($n$ represents the number of samples in the data) otherwise.
For LDP-MST, we followed the experimental setup from the original paper and applied PCA to reduce the dimensionality of datasets with more than 10 dimensions, selecting the best result between 2 and 10 dimensions. 
For SC, we used the library function to construct similarity using k-nearest neighbors. 
For DPC, the suggested average number of points within the cutoff distance should be 1\% to 2\% of the total data. 
We searched this ratio from 1.0 to 2.0 with a step size of 0.1 to find the optimal result. 
For DEMOS, we manually drew rectangular boxes in the decision graph to select cluster center points.The value of $k$ for the number of nearest neighbors is set according to $\sqrt{n}$ ($n$ represents the number of samples in the data) as described in the original paper.
For DPC-DBFN, we searched for the optimal $k$ in the range of 2 to 100. 
For MDMSC, we searched for $k$ in the range of 2 to 50 and for $\beta$ in $\{8, 16\}$, and the threshold $\lambda$ was set to $1.5$. 

All datasets in Table~\ref{tab:real} were normalized, and the results were averaged over 10 runs. The implementation of MDMSC can be found at \url{https://github.com/SWJTU-ML/MDMSC}.

\begin{table}[tb]
\centering
\setlength{\tabcolsep}{1mm}
\small
\begin{tabular}{cccc}
\toprule
Dataset     & \#Instance & \#Attributes & \#Clusters \\ 
\midrule
border                       & 840      & 892        & 3        \\
mfea-fac                     & 2000     & 216        & 10       \\
Kdd9                         & 1280     & 41         & 3        \\
landsatEW                    & 6435     & 36         & 6        \\
balance-scale                & 625      & 4          & 3        \\
pengleukEW                   & 72       & 7070       & 2        \\
Pendigits                    & 10992    & 16         & 10       \\
energy-y2                    & 766      & 8          & 3        \\
optical\_test                & 1797     & 62         & 10       \\
soybean\_test                & 376      & 35         & 18       \\
car                          & 1728     & 6          & 4        \\
semeionEW                    & 1593     & 256        & 10       \\
vote                         & 435      & 16         & 2       \\ 
\bottomrule
\end{tabular}
\caption{Real-world datasets.}\label{tab:real}
\end{table}

\begin{figure}[htbp]
\centering
\includegraphics[width=\columnwidth]{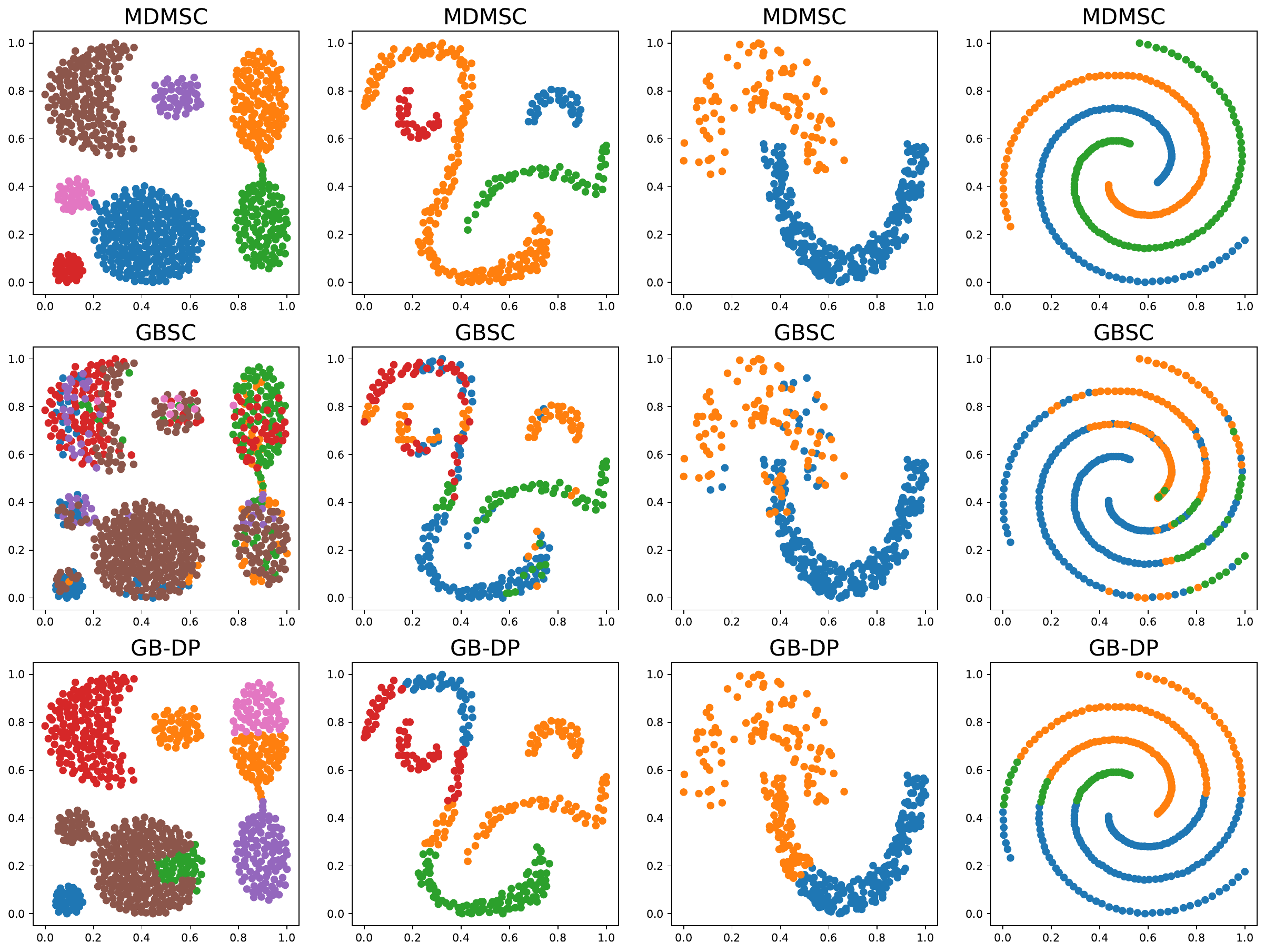}
\caption{Visualizations on synthetic datasets.}\label{fig:synthetic}
\end{figure}

\subsection{Visualizations on Synthetic Datasets}
We present visualizations of several datasets with obvious manifold structures in Fig. \ref{fig:synthetic}, where MDMSC performs well,
while GBSC and GB-DP misclassify points within the same cluster.
For example, GBSC misclassifies many isolated points into incorrect clusters due to the granular-ball splitting process and its simplistic similarity measure. On the other hand, MDMSC, by leveraging local density features and considering manifold curvature, effectively captures complex shapes and manifold structures.

\begin{table*}[tb]
\centering
\setlength{\tabcolsep}{1mm}
\small
\begin{tabular}{ccccccccccc}
\toprule
Dataset              &      & MDMSC           & GBSC  & GB-DP          & LDP-MST        & USPEC          & SC             & DPC   & DEMOS & \multicolumn{1}{l}{DPC-DBFN} \\
\midrule
border               & ARI  & \textbf{19.54} & 0.01  & 16.70          & 1.18           & 17.35          & 17.48          & 8.73  & 14.20 & 2.83                         \\
                     & NMI  & \textbf{18.59} & 0.34  & 15.17 & 1.89           & 16.15          & 15.58          & 10.92 & 14.66 & 6.10                         \\
                     & ACC  & \textbf{56.67} & 44.39 & 49.76          & 45.00          & 53.46          & 51.90          & 49.52 & 52.50 & 45.59                        \\
mfea-fac             & ARI  & \textbf{86.28} & 24.92 & 39.78          & 61.50          & 84.57          & 85.57          & 59.85 & 75.58 & 48.80                        \\
                     & NMI  & \textbf{87.54} & 42.58 & 58.79          & 72.41          & 86.39          & 87.31          & 74.43 & 82.57 & 69.11                        \\
                     & ACC  & \textbf{93.45} & 41.30 & 61.90          & 72.15          & 92.56          & 93.00          & 65.85 & 87.30 & 59.10                        \\
Kdd9                 & ARI  & \textbf{96.53} & 5.67  & 84.93          & 86.08          & 70.71          & 52.16          & 0.05  & NA   & 3.02                         \\
                     & NMI  & \textbf{96.09} & 7.96  & 85.98          & 85.79          & 73.08          & 62.55          & 0.36  & NA   & 13.75                        \\
                     & ACC  & \textbf{97.97} & 41.67 & 94.53          & 95.00             & 86.55          & 73.85          & 38.83 & NA   & 46.56                        \\
landsatEW            & ARI  & 56.45 & 49.17 & 34.48          & 50.01          & \textbf{58.36}          & 47.23          & 46.18 & NA   & 4.68                         \\
                     & NMI  & 63.50          & 56.53 & 47.45          & 59.83          & \textbf{65.16} & 60.42          & 56.06 & NA   & 10.40                        \\
                     & ACC  & 70.16 & 67.61 & 55.63          & 64.21          & 70.50          & 63.66          & \textbf{71.75} & NA   & 30.58                        \\
balance-scale        & ARI  & \textbf{24.60} & 5.25  & 10.31          & 0.91           & 11.10          & 13.57          & 15.07 & NA   & 11.14                        \\
                     & NMI  & \textbf{22.66} & 5.44  & 8.91           & 6.63           & 9.33           & 12.15          & 11.73 & NA   & 10.42                        \\
                     & ACC  & \textbf{60.16} & 51.70 & 52.48          & 49.92          & 51.94          & 54.24          & 54.40 & NA   & 54.40                        \\
pengleukEW           & ARI  & \textbf{35.85} & 2.96  & 23.79          & 32.90          & 29.75          & 26.63          & 10.52 & 26.00 & 15.95                        \\
                     & NMI  & \textbf{25.07} & 5.95  & 17.13          & 24.06          & 21.96          & 18.95          & 14.39 & 16.98 & 10.73                        \\
                     & ACC  & \textbf{80.56} & 58.61 & 75.00          & 79.16          & 77.78          & 76.39          & 70.83 & 76.38 & 70.83                        \\
Pendigits            & ARI  & \textbf{77.81} & 42.62 & 51.91          & 70.11          & 71.05          & 76.24          & 63.50 & 70.53 & 49.91                        \\
                     & NMI  & \textbf{84.85} & 59.72 & 67.51          & 81.66          & 81.52          & 83.73          & 75.41 & 80.75 & 67.67                        \\
                     & ACC  & \textbf{88.08} & 57.17 & 63.05          & 78.02          & 82.98          & 87.17          & 75.63 & 84.18 & 64.55                        \\
energy-y2            & ARI  & \textbf{70.65} & 4.57  & 69.17          & 70.58          & 35.24          & 70.58          & 55.17 & 45.58 & 65.29                        \\
                     & NMI  & 66.44 & 5.55  & \textbf{68.97}          & 66.40          & 47.02          & 66.40          & 62.13 & 58.11 & 66.67                        \\
                     & ACC  & \textbf{80.86} & 51.07 & 74.09          & 80.73          & 55.95          & 80.73          & 65.76 & 49.86 & 71.61                        \\
optical\_test        & ARI  & \textbf{84.08} & 28.70 & 32.52          & 56.37          & 80.84          & 81.47          & 72.25 & 82.27 & -0.06                        \\
\multicolumn{1}{l}{} & NMI  & \textbf{90.13} & 51.58 & 59.44          & 75.04          & 88.06          & 89.91          & 83.41 & 87.90 & 0.83                         \\
                     & ACC  & \textbf{89.43} & 52.08 & 55.15          & 65.77          & 86.69          & 87.82          & 78.69 & 89.14 & 10.51                        \\
soybean\_test        & ARI  & 43.04          & 29.26 & 31.26          & 39.89          & 39.11          & \textbf{46.22} & 32.72 & 31.30 & 35.13                        \\
                     & NMI  & \textbf{75.08} & 59.03 & 65.55          & 67.21          & 74.81          & 69.46          & 64.65 & 59.82 & 58.29                        \\
\multicolumn{1}{l}{} & ACC  & \textbf{64.10} & 44.55 & 42.82          & 53.98          & 60.61          & 56.41          & 46.01 & 44.95 & 50.80                        \\
car                  & ARI  & \textbf{40.44} & -3.08 & 4.90           & 11.40          & 17.19          & 17.79          & 15.90 & 9.16  & 14.46                        \\
                     & NMI  & 28.51          & 5.28  & 11.13          & \textbf{32.40} & 24.57          & 25.04          & 30.47 & 15.73 & 10.36                        \\
                     & ACC  & \textbf{69.44} & 57.25 & 35.94          & 55.55          & 47.39          & 47.96          & 51.56 & 46.81 & 61.57                        \\
semeionEW            & ARI  & \textbf{52.62} & 0.53  & 23.28          & 26.79          & 48.34          & 44.81          & 19.63 & 33.19 & 26.03                        \\
                     & NMI  & \textbf{67.45}          & 5.85  & 42.15          & 52.15          & 64.66 & 64.29          & 35.20 & 55.53 & 42.59                        \\
                     & ACC  & \textbf{68.30} & 13.21 & 41.68          & 47.70          & 62.82          & 58.59          & 36.97 & 45.26 & 39.30                        \\
vote                 & ARI  & \textbf{57.10} & -0.83 & 53.67          & 20.57          & 0.37           & 55.72          & 53.00 & NA   & 45.07                        \\
                     & NMI  & \textbf{49.93} & 8.32  & 45.98          & 24.44          & 0.31           & 48.40          & 45.95 & NA   & 33.92                        \\
                     & ACC  & \textbf{87.82} & 55.08 & 86.67          & 72.87          & 61.61          & 87.36          & 86.44 & NA   & 83.67                        \\
                    \bottomrule
\end{tabular}
\caption{Results on real-world datasets (\%).}\label{tab:realres}
\end{table*}

\begin{table*}[tb]
\centering
\setlength{\tabcolsep}{1mm}  
\small
\begin{tabular}{ccccccccccc}
\toprule
DataSet     &               & Ours                 & GBSC  & GB-DP          & LDP-MST        & USPEC          & SC             & DPC            & DEMOS & \multicolumn{1}{l}{DPC-DBFN} \\ \hline
                  &Par.& $k$\textbackslash$\beta$  &  & & dimension     & $p$                & $k$   & dc    & $k$     & $k$     \\ \hline           
border            &    & 24 & \textbackslash  & \textbackslash  & 7  & 840   & 13  & 1.0   & 29    & 2        \\
mfea-fac          &    & 6  & \textbackslash  & \textbackslash  & 8  & 1000  & 9   & 1.4   & 45    & 18        \\
Kdd9              &    & 21 & \textbackslash  & \textbackslash  & 2  & 1000  & 14  & 1.0   & NA    & 100       \\
landsatEW         &    & 43 & \textbackslash  & \textbackslash  & 6  & 1000  & 5   & 1.4   & NA    & 99         \\
balance-scale     &    & 44 & \textbackslash  & \textbackslash  & 4  & 625   & 11  & 1.7   & NA    & 84         \\
pengleukEW        &    & 3  & \textbackslash  & \textbackslash  & 10 & 72    & 11  & 1.0   & 8     & 43          \\
Pendigits         &    & 23 & \textbackslash  & \textbackslash  & 7  & 1000  & 24  & 1.0   & 105   & 91           \\
energy-y2         &    & 20 & \textbackslash  & \textbackslash  & 8  & 766   & 9   & 1.0   & 28    & 15           \\
optical\_test     &    & 12 & \textbackslash  & \textbackslash  & 9  & 1000  & 10  & 1.9   & 42    & 3             \\
soybean\_test     &    & 4  & \textbackslash  & \textbackslash  & 6  & 376   & 49  & 1.5   & 19    & 25            \\
car              &     & 8  & \textbackslash  & \textbackslash  & 6  & 1000  & 31  & 1.1   & 42    & 31            \\
semeionEW        &     & 3  & \textbackslash  & \textbackslash  & 6  & 1000  & 5   & 1.0   & 40    & 5           \\
vote             &     & 46  & \textbackslash & \textbackslash  & 6  & 435   & 33  & 2.0   & NA    & 97            \\ 
                    \bottomrule
\end{tabular}
\caption{Hyperparameter values of the results on real-world datasets.}\label{tab:pars}
\end{table*}
\subsection{Results on Real-World Datasets}
Table~\ref{tab:realres} shows the results (bold means best) of our algorithm and the other eight comparison algorithms on real-world datasets. 
Table~\ref{tab:pars} shows their corresponding hyperparameter values. The "\textbackslash" indicates that the corresponding algorithm does not require hyperparameters, while "NA" indicates that the algorithm fails to produce results. MDMSC achieves the best performance on most datasets. For example, on the high-dimensional dataset pengleukEW, MDMSC significantly outperforms most comparison algorithms. We attribute this to the consideration of manifold curvature, allowing it to effectively handle complex high-dimensional datasets. 
Additionally, on the large-scale dataset Pendigits, our algorithm also shows superior performance, demonstrating its capability to effectively handle large-scale datasets by partitioning pseudo-clusters. 
Although our algorithm does not always achieve the best results on a few datasets like landsatEW and soybean\_test, its performance is very close to the optimal ones. This indicates that our algorithm possesses strong adaptability and effectiveness across a wide range of application scenarios.

Besides, We first performed a Friedman test on the ACC matric of MDMSC and the comparison algorithms. After confirming significant differences between the algorithms, we conducted a Nemenyi post-hoc test to compute the statistical significance of the differences between MDMSC and the comparison algorithms.
We obtained $Friedman Statistic=49.37$ with $P=2.10e-08<0.05$, confirming significant differences between the algorithms. In the Nemenyi post-hoc test, the critical difference threshold $CD=2.10$. Fig. \ref{CD} illustrates the mean rank of all algorithms, where a rank difference greater than CD indicates a significant difference between the two algorithms.

\begin{figure}[htbp]
\centering
\includegraphics[width=\columnwidth]{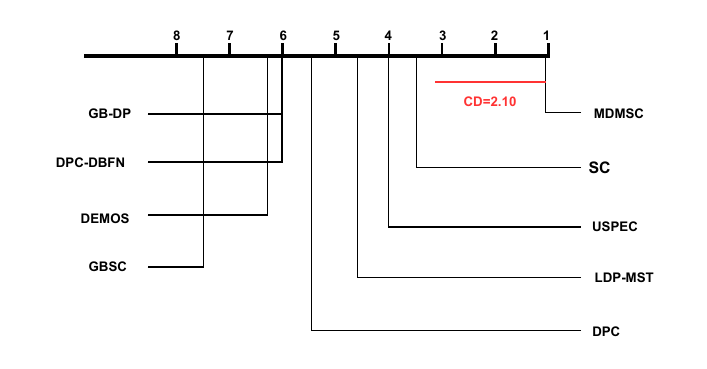} 
\caption{Mean rank of all algorithms on the ACC metric.}
\label{CD}
\end{figure}

\subsection{Ablation Study}
To demonstrate the effectiveness of each component of our algorithm, we conducted ablation studies on 7 real-world datasets. The best results are presented in Table~\ref{tab:ab} by tuning hyperparameters. 
The columns $a$, $b$, and $c$ represent the following different settings, respectively:
\begin{enumerate}
\item[$a$:] Directly uses the whole dataset as the initial pseudo-cluster, without using Algorithm~\ref{alg:algorithm}.
\item[$b$:]  
Does not further split the pseudo-clusters. 
\item[$c$:]  Only uses compactness to split the pseudo-clusters.
\end{enumerate}

The results indicate that the performance of the algorithm decreases when a specific component is removed. Setting $a$ demonstrates that extracting local density features helps characterize the data distribution. Settings $b$ and $c$ show that considering manifold curvature is beneficial for the quality of pseudo-clusters. 
\begin{table}[tb]
\centering
\setlength{\tabcolsep}{1mm}
\small
\begin{tabular}{ccccc}
\toprule
Dataset       & MDMSC           & $a$     & $b$     & $c$              \\ 
\midrule
mfea-fac      & \textbf{86.28} & 71.28 & 71.72 & 84.40          \\
Kdd9          & \textbf{96.53} & 84.93 & 84.93 & \textbf{96.53} \\
balance-scale & \textbf{24.60} & 14.40 & 19.61 & 22.46          \\
pengleukEW    & \textbf{35.85} & 29.61 & -5.84 & 23.79          \\
optical\_test & \textbf{84.08} & 63.56 & 78.11 & 81.53          \\
semeionEW     & \textbf{52.62} & 34.90 & 46.57 & 48.12          \\
vote          & \textbf{57.10} & 54.34 & 53.00 & 53.00  
\\ \bottomrule
\end{tabular}
\caption{Ablation study results (\%).}\label{tab:ab}
\end{table}

\subsection{Robustness Analysis}
Fig. \ref{fig11} demonstrates the ARI variations of MDMSC under different hyperparameter settings on multiple datasets, where $k$ was set in $[3,50]$, $\lambda$ in $[1.0, 3.0]$ with a step size of 0.2, and $\beta$ in $[8,16]$. 
From the results in Fig. \ref{fig11}, it is evident that while the algorithm exhibits some fluctuations with respect to $k$ (larger fluctuations with smaller $k$ and stablized as $k$ increases), its performance remains relatively stable when varying $\lambda$ and $\beta$. This suggests that MDMSC has a certain degree of robustness against changes in these parameters.

\begin{figure}[htbp]
\centering
\includegraphics[width=\columnwidth]{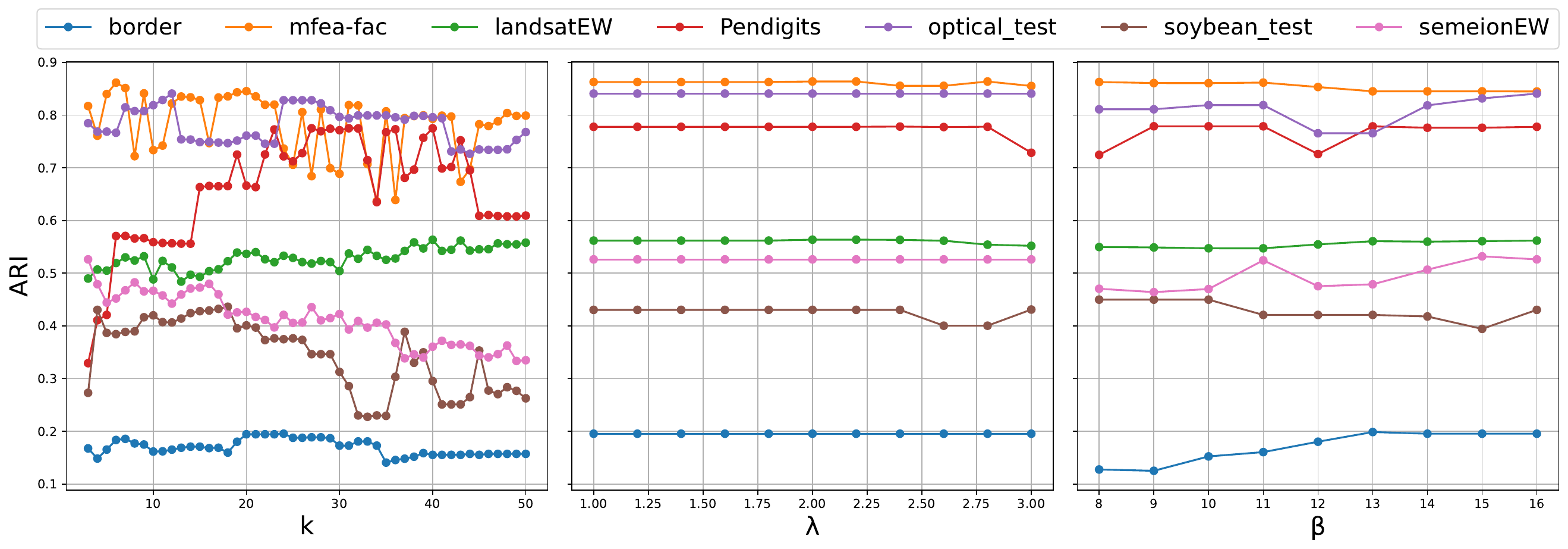} 
\caption{Impacts of $k$ on clustering performance.}
\label{fig11}
\end{figure}

\subsection{Running Time}
Fig. \ref{fig12} compares the running time and ACC of our algorithm with GBSC.
The proposed MDMSC has shorter run time than GBSC on 7 out of 13 datasets and similar run time on most of the other datasets, and on all of these datasets, MDMSC has higher ACC than GBSC. 
The reason for the slower cases is that we make use of more complex micro-clusters to represent data and the splitting rule is also more sophisticated, and it is actually worthy in most of the cases.

\begin{figure}[tb]
\centering
\includegraphics[width=0.9\columnwidth]{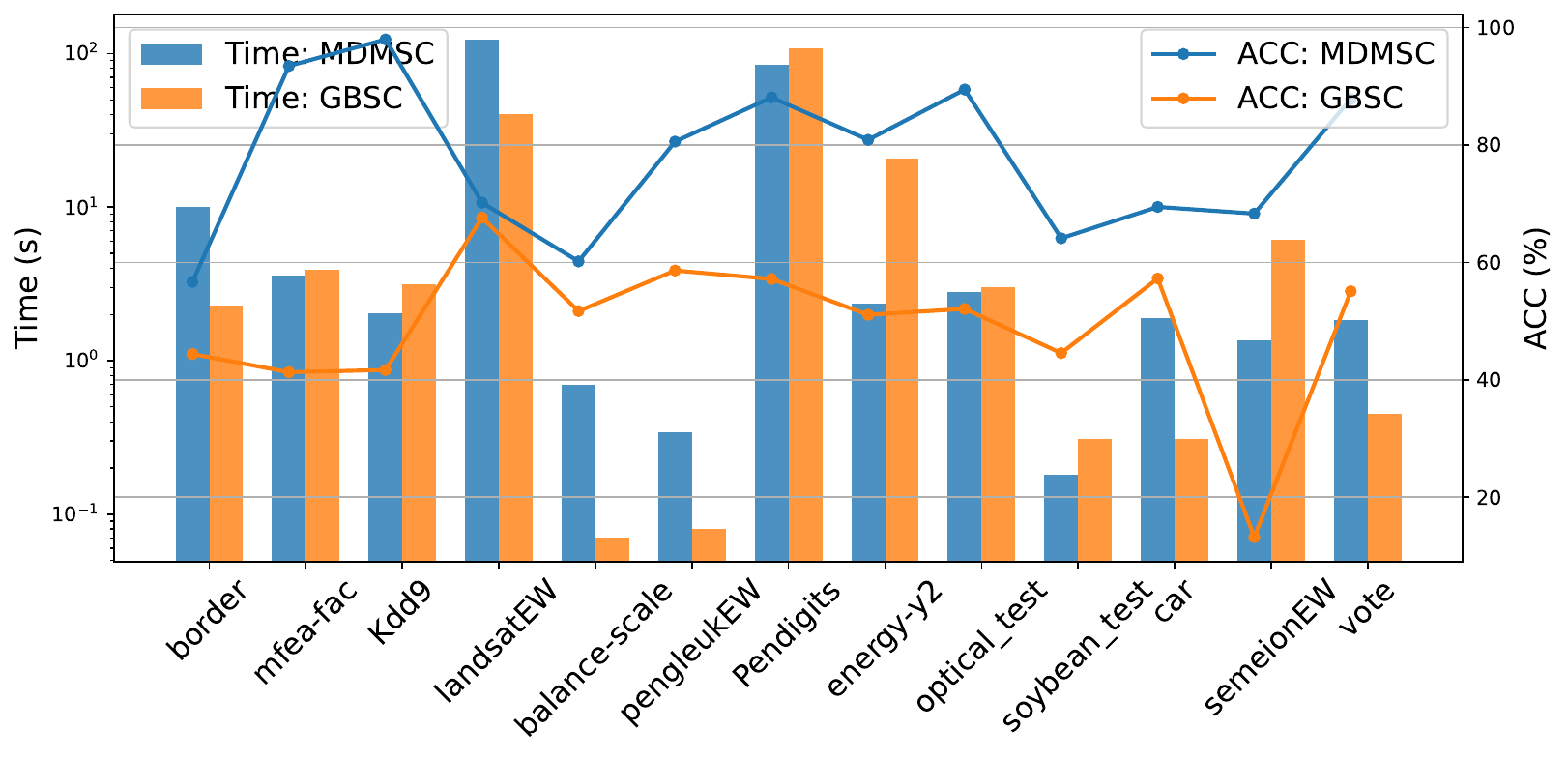} 
\caption{Time and ACC comparison.}
\label{fig12}
\end{figure}

\section{Conclusion}
The paper proposes a new approach to represent data in a coarse granularity that can accelerate and improve the performance of spectral clustering.
Specifically, the approach can discover micro-clusters based on local density distributions of data, and introduces the concept of manifold curvature of micro-clusters to help split them into more convex ones.
In this way, the approach provides better representation of data and simplifies the characterization of the similarities, resulting in better performance in subsequent spectral clustering. 
 Evaluations on 4 synthetic datasets and 13 real-world datasets, against relevant state-of-the-art algorithms, show that  
 the proposed algorithm performs best on most datasets. The ablation experiments also demonstrate the effectiveness of the proposed components.

There is still room for improvement and optimization. 
Future work will attempt to introduce more efficient computational techniques, such as parallel and distributed computing, 
to further enhance the computational speed of the algorithm, 
especially when handling ultra-large-scale datasets. 
Additionally, one can consider developing adaptive hyperparameter optimization methods for this approach, and can also consider to combine it with other clustering methods.

\section{Acknowledgments}
We would like to thank the anonymous reviewers for their invaluable help to improve the paper. This work was supported by the National Natural Science Foundation of
China [Nos. 61806170 and 62276218], the Fundamental Research Funds for the Central Universities [Nos. 2682022ZTPY082 and 2682023ZTPY027], and the Open Foundation of Key Laboratory of Cyberspace Security, Ministry of Education of China and Henan Key Laboratory of Cyberspace Situation Awareness [No.KLCS20240105].
\bibliography{aaai25}

\begin{thebibliography}{30}
\providecommand{\natexlab}[1]{#1}

\bibitem[{Cai and Chen(2015)}]{DBLP:journals/tcyb/CaiC15}
Cai, D.; and Chen, X. 2015.
\newblock Large Scale Spectral Clustering Via Landmark-Based Sparse Representation.
\newblock \emph{{IEEE} Trans. Cybern.}, 45(8): 1669--1680.

\bibitem[{Chang and Yeung(2008)}]{DBLP:journals/pr/ChangY08}
Chang, H.; and Yeung, D. 2008.
\newblock Robust path-based spectral clustering.
\newblock \emph{Pattern Recognit.}, 41(1): 191--203.

\bibitem[{Chen et~al.(2011)Chen, Song, Bai, Lin, and Chang}]{DBLP:journals/pami/ChenSBLC11}
Chen, W.; Song, Y.; Bai, H.; Lin, C.; and Chang, E.~Y. 2011.
\newblock Parallel Spectral Clustering in Distributed Systems.
\newblock \emph{{IEEE} Trans. Pattern Anal. Mach. Intell.}, 33(3): 568--586.

\bibitem[{Cheng et~al.(2023)Cheng, Li, Xia, Wang, Huang, and Zhang}]{cheng2023fast}
Cheng, D.; Li, Y.; Xia, S.; Wang, G.; Huang, J.; and Zhang, S. 2023.
\newblock A fast granular-ball-based density peaks clustering algorithm for large-scale data.
\newblock \emph{IEEE Trans. Neural Networks Learn. Syst}, 1--14.

\bibitem[{Cheng et~al.(2021)Cheng, Zhu, Huang, Wu, and Yang}]{DBLP:journals/tkde/ChengZHWY21}
Cheng, D.; Zhu, Q.; Huang, J.; Wu, Q.; and Yang, L. 2021.
\newblock Clustering with Local Density Peaks-Based Minimum Spanning Tree.
\newblock \emph{{IEEE} Trans. Knowl. Data Eng.}, 33(2): 374--387.

\bibitem[{Cheng and Ma(2022)}]{DBLP:journals/bioinformatics/ChengM22}
Cheng, Y.; and Ma, X. 2022.
\newblock scGAC: a graph attentional architecture for clustering single-cell RNA-seq data.
\newblock \emph{Bioinform.}, 38(8): 2187--2193.

\bibitem[{Ester et~al.(1996)Ester, Kriegel, Sander, and Xu}]{DBLP:conf/kdd/EsterKSX96}
Ester, M.; Kriegel, H.; Sander, J.; and Xu, X. 1996.
\newblock A Density-Based Algorithm for Discovering Clusters in Large Spatial Databases with Noise.
\newblock In \emph{KDD}, 226--231.

\bibitem[{Fowlkes et~al.(2004)Fowlkes, Belongie, Chung, and Malik}]{DBLP:journals/pami/FowlkesBCM04}
Fowlkes, C.~C.; Belongie, S.~J.; Chung, F. R.~K.; and Malik, J. 2004.
\newblock Spectral Grouping Using the Nystr{\"{o}}m Method.
\newblock \emph{{IEEE} Trans. Pattern Anal. Mach. Intell.}, 26(2): 214--225.

\bibitem[{Gao et~al.(2024)Gao, Chen, Nie, Yu, and Wang}]{DBLP:journals/pr/GaoCNYW24}
Gao, C.; Chen, W.; Nie, F.; Yu, W.; and Wang, Z. 2024.
\newblock Spectral clustering with linear embedding: {A} discrete clustering method for large-scale data.
\newblock \emph{Pattern Recognit.}, 151: 110396.

\bibitem[{Gionis, Mannila, and Tsaparas(2007)}]{DBLP:journals/tkdd/GionisMT07}
Gionis, A.; Mannila, H.; and Tsaparas, P. 2007.
\newblock Clustering aggregation.
\newblock \emph{{ACM} Trans. Knowl. Discov. Data}, 1(1): 4.

\bibitem[{Guan et~al.(2023)Guan, Li, Chen, He, and Chen}]{DBLP:journals/tkde/GuanLCHC23}
Guan, J.; Li, S.; Chen, X.; He, X.; and Chen, J. 2023.
\newblock {DEMOS:} Clustering by Pruning a Density-Boosting Cluster Tree of Density Mounts.
\newblock \emph{{IEEE} Trans. Knowl. Data Eng.}, 35(10): 10814--10830.

\bibitem[{He et~al.(2019)He, Ray, Guan, and Zhang}]{DBLP:journals/tcyb/HeRGZ19}
He, L.; Ray, N.; Guan, Y.; and Zhang, H. 2019.
\newblock Fast Large-Scale Spectral Clustering via Explicit Feature Mapping.
\newblock \emph{{IEEE} Trans. Cybern.}, 49(3): 1058--1071.

\bibitem[{Huang et~al.(2020)Huang, Wang, Wu, Lai, and Kwoh}]{DBLP:journals/tkde/HuangWWLK20}
Huang, D.; Wang, C.; Wu, J.; Lai, J.; and Kwoh, C. 2020.
\newblock Ultra-Scalable Spectral Clustering and Ensemble Clustering.
\newblock \emph{{IEEE} Trans. Knowl. Data Eng.}, 32(6): 1212--1226.

\bibitem[{Jain and Law(2005)}]{DBLP:conf/premi/JainL05}
Jain, A.~K.; and Law, M. H.~C. 2005.
\newblock Data Clustering: {A} User's Dilemma.
\newblock In \emph{PReMI}, volume 3776, 1--10.

\bibitem[{Lotfi, Moradi, and Beigy(2020)}]{DBLP:journals/pr/LotfiMB20}
Lotfi, A.; Moradi, P.; and Beigy, H. 2020.
\newblock Density peaks clustering based on density backbone and fuzzy neighborhood.
\newblock \emph{Pattern Recognit.}, 107: 107449.

\bibitem[{Macgregor(2023)}]{DBLP:conf/nips/Macgregor23}
Macgregor, P. 2023.
\newblock Fast and Simple Spectral Clustering in Theory and Practice.
\newblock In \emph{NeurIPS}, 34410--34425.

\bibitem[{Nie et~al.(2024)Nie, Xue, Yu, and Li}]{DBLP:journals/pami/NieXYL24}
Nie, F.; Xue, J.; Yu, W.; and Li, X. 2024.
\newblock Fast Clustering With Anchor Guidance.
\newblock \emph{{IEEE} Trans. Pattern Anal. Mach. Intell.}, 46(4): 1898--1912.

\bibitem[{Rodriguez and Laio(2014)}]{rodriguez2014clustering}
Rodriguez, A.; and Laio, A. 2014.
\newblock Clustering by fast search and find of density peaks.
\newblock \emph{Science}, 344(6191): 1492--1496.

\bibitem[{Shi and Malik(2000)}]{DBLP:journals/pami/ShiM00}
Shi, J.; and Malik, J. 2000.
\newblock Normalized Cuts and Image Segmentation.
\newblock \emph{{IEEE} Trans. Pattern Anal. Mach. Intell.}, 22(8): 888--905.

\bibitem[{Steinley(2004)}]{steinley2004properties}
Steinley, D. 2004.
\newblock Properties of the hubert-arable adjusted rand index.
\newblock \emph{Psychological Methods}, 9(3): 386.

\bibitem[{Tron et~al.(2017)Tron, Zhou, Esteves, and Daniilidis}]{DBLP:conf/iccv/TronZED17}
Tron, R.; Zhou, X.; Esteves, C.; and Daniilidis, K. 2017.
\newblock Fast Multi-image Matching via Density-Based Clustering.
\newblock In \emph{ICCV}, 4077--4086.

\bibitem[{Vladymyrov and Carreira{-}Perpi{\~{n}}{\'{a}}n(2016)}]{DBLP:conf/icml/VladymyrovC16}
Vladymyrov, M.; and Carreira{-}Perpi{\~{n}}{\'{a}}n, M.~{\'{A}}. 2016.
\newblock The Variational Nystrom method for large-scale spectral problems.
\newblock In \emph{ICML}, volume~48, 211--220.

\bibitem[{von Luxburg(2007)}]{DBLP:journals/sac/Luxburg07}
von Luxburg, U. 2007.
\newblock A tutorial on spectral clustering.
\newblock \emph{Stat. Comput.}, 17(4): 395--416.

\bibitem[{Wu et~al.(2018)Wu, Chen, Yen, Xu, Xia, and Aggarwal}]{DBLP:conf/kdd/WuCYXXA18}
Wu, L.; Chen, P.; Yen, I.~E.; Xu, F.; Xia, Y.; and Aggarwal, C.~C. 2018.
\newblock Scalable Spectral Clustering Using Random Binning Features.
\newblock In \emph{SIGKDD}, 2506--2515.

\bibitem[{Xie et~al.(2023)Xie, Kong, Xia, Wang, and Gao}]{DBLP:journals/tkde/XieKXWG23}
Xie, J.; Kong, W.; Xia, S.; Wang, G.; and Gao, X. 2023.
\newblock An Efficient Spectral Clustering Algorithm Based on Granular-Ball.
\newblock \emph{{IEEE} Trans. Knowl. Data Eng.}, 35(9): 9743--9753.

\bibitem[{Xu, Liu, and Gong(2003)}]{DBLP:conf/sigir/XuLG03}
Xu, W.; Liu, X.; and Gong, Y. 2003.
\newblock Document clustering based on non-negative matrix factorization.
\newblock In \emph{SIGIR}, 267--273.

\bibitem[{Yan, Huang, and Jordan(2009)}]{DBLP:conf/kdd/YanHJ09}
Yan, D.; Huang, L.; and Jordan, M.~I. 2009.
\newblock Fast approximate spectral clustering.
\newblock In \emph{SIGKDD}, 907--916.

\bibitem[{Yang et~al.(2023)Yang, Deng, Chen, Chen, Yang, Gong, and Hao}]{DBLP:journals/pr/YangDCCYGH23}
Yang, G.; Deng, S.; Chen, X.; Chen, C.; Yang, Y.; Gong, Z.; and Hao, Z. 2023.
\newblock {RESKM:} {A} General Framework to Accelerate Large-Scale Spectral Clustering.
\newblock \emph{Pattern Recognit.}, 137: 109275.

\bibitem[{Yang et~al.(2010)Yang, Xu, Nie, Yan, and Zhuang}]{DBLP:journals/tip/YangXNYZ10}
Yang, Y.; Xu, D.; Nie, F.; Yan, S.; and Zhuang, Y. 2010.
\newblock Image Clustering Using Local Discriminant Models and Global Integration.
\newblock \emph{{IEEE} Trans. Image Process.}, 19(10): 2761--2773.

\bibitem[{Zhang, Wang, and Shang(2023)}]{DBLP:conf/emnlp/0001WS23}
Zhang, Y.; Wang, Z.; and Shang, J. 2023.
\newblock ClusterLLM: Large Language Models as a Guide for Text Clustering.
\newblock In \emph{EMNLP}, 13903--13920.

\end{thebibliography}
\end{document}